\DeclareMathOperator*{\argmax}{arg\,max}
\begin{document}
\title{HyperCore: Coreset Selection under Noise via Hypersphere Models}
\titlerunning{HyperCore: Coreset Selection under Noise via Hypersphere Models}
%
\author{
Brian B. Moser\inst{2,3}  \and Arundhati S. Shanbhag\inst{1, 2} \and Tobias Nauen\inst{1, 2} \and Stanislav Frolov\inst{2} \and Federico Raue\inst{2} \and Joachim Folz\inst{2}  \and Andreas Dengel\inst{1, 2}}
\authorrunning{Moser et al.}
%
\institute{University of Kaiserslautern-Landau, Germany \and
German Research Center for Artificial Intelligence, Germany \and
Corresponding Author\\
\email{first.last@dfki.de}}
\maketitle              
\begin{abstract}
The goal of coreset selection methods is to identify representative subsets of datasets for efficient model training. 
Yet, existing methods often ignore the possibility of annotation errors and require fixed pruning ratios, making them impractical in real-world settings.
We present HyperCore, a robust and adaptive coreset selection framework designed explicitly for noisy environments. 
HyperCore leverages lightweight hypersphere models learned per class, embedding in-class samples close to a hypersphere center while naturally segregating out-of-class samples based on their distance. 
By using Youden’s J statistic, HyperCore can adaptively select pruning thresholds, enabling automatic, noise-aware data pruning without hyperparameter tuning. 
Our experiments reveal that HyperCore consistently surpasses state-of-the-art coreset selection methods, especially under noisy and low-data regimes. 
HyperCore effectively discards mislabeled and ambiguous points, yielding compact yet highly informative subsets suitable for scalable and noise-free learning. 
\keywords{Coreset Selection  \and Image Classification.}
\end{abstract}

\section{Introduction}
Modern deep learning excels with scale, but scale comes at a cost. 
Training on massive datasets drains resources and introduces noise, creating a growing need for efficient and robust data selection~\cite{wang2018dataset,csiba2018importance,zheng2022coverage,katharopoulos2018not}. 
In practice, acquiring or maintaining such large datasets is often infeasible due to storage limits, privacy constraints, or annotation costs~\cite{ganguli2022predictability,yang2024data}.
Coreset selection seeks to address this challenge by identifying a small, informative subset that preserves the performance of training on the full dataset~\cite{moser2025coreset,sorscher2022beyond,guo2022deepcore,bhalerao2024fine}. 
Beyond efficiency, coresets can improve robustness by excluding noisy, redundant, or overly difficult examples, reducing overfitting and sharpening generalization~\cite{bengio2019extreme,katharopoulos2018not}.
As highlighted by \cite{sorscher2022beyond}, with the right pruning ratio, a well-selected coreset can even outperform full-data training, a surprising and powerful result, which has been verified in some applications~\cite{na2021accelerating,moser2022less,yao2023asp,moser2024study,ding2023not}.

Despite these benefits, selecting an optimal coreset remains nontrivial~\cite{zheng2022coverage,sener2017active}.
Most methods rely on gradient heuristics~\cite{paul2021deep,mirzasoleiman2020coresets,killamsetty2021grad}, influence estimation~\cite{toneva2018empirical,paul2021deep}, or decision boundary estimates~\cite{ducoffe2018adversarial,margatina2021active}.
Yet, current methods struggle with noise, computational overhead, and lack of class-awareness - key challenges in real-world applications~\cite{zhang2021understanding}.
Crucially, these approaches often prune via fixed sampling budgets rather than adapting to the natural density or ambiguity within each class~\cite{agarwal2005geometric,sorscher2022beyond,guo2022deepcore,zheng2022coverage}.

Our method, \textbf{HyperCore}, offers a new perspective. 
We train lightweight hypersphere models~\cite{tax2004support,ruff2018deep,liznerski2020explainable} that learn to separate in-class from out-of-class samples in a class-conditional embedding space. 
Here, “out-of-class” refers not only to samples from other classes but also to mislabeled, ambiguous, or corrupted inputs that appear atypical when measured against a given class distribution. 
Treating such points as outliers is natural in a per-class setting, since they provide conflicting training signals for that class. 
By measuring the distance of each point to its class-specific hypersphere center, we obtain an interpretable conformity score. 

To determine the hypersphere decision boundary, we adaptively select pruning thresholds without tuning hyperparameters. 
More explicitly, we exploit \emph{Youden’s J statistic}~\cite{youden1950index}, a well-known criterion from signal detection theory, to filter uncertain or atypical examples in a per-class, data-driven way. 
This thresholding adapts automatically to class imbalance, ambiguity, and noise, in contrast to fixed global ratios.

HyperCore is computationally lightweight: each class-specific model is trained independently, enabling parallelization across classes; thresholding reduces to a one-pass scan over sorted distances, i.e., $O(n_c \log n_c)$ per class. 
In practice, this cost scales linearly with the number of classes and can be amortized by parallel workers or shared backbones.

Our contributions can be summarized as follows:
\begin{itemize}
\item We introduce a simple \emph{class-wise hypersphere formulation} with fixed centers and pseudo-Huber loss, yielding interpretable conformity scores and avoiding costly center estimation.
\item We propose \emph{adaptive pruning via Youden’s J statistic}, eliminating the need for global ratio tuning and naturally adjusting to per-class density and noise.
\item We demonstrate \emph{robustness under label noise and high pruning ratios}, where HyperCore outperforms state-of-the-art coreset selection methods across ImageNet-1K and CIFAR-10.
\item We provide \emph{scalability analysis}, showing that HyperCore remains efficient due to embarrassingly parallel training and near-linear complexity.
\end{itemize}


\section{Preliminaries} 

\subsection{Coreset Selection}
\label{sec:probdef}

Consider a supervised learning setup, where the training set 
\(\mathcal{T} = \{(\mathbf{x}_i, y_i)\}_{i=1}^{N}\) 
contains \(N\) i.i.d.\ samples drawn from an unknown distribution \(P\). Each input \(\mathbf{x}_i \in \mathcal{X}\) is paired with a label \(y_i \in \mathcal{Y}\).

\begin{definition}[Coreset Selection]
The goal is to extract a subset \(\mathcal{S} \subset \mathcal{T}\) with \(\lvert \mathcal{S} \rvert \ll \lvert \mathcal{T} \rvert\), such that training a model \(\theta^{\mathcal{S}}\) on \(\mathcal{S}\) achieves comparable generalization to training \(\theta^{\mathcal{T}}\) on the full dataset \(\mathcal{T}\):
\begin{equation}
  \mathcal{S}^* = \mathop{\arg\min}_{\substack{\mathcal{S} \subset \mathcal{T} :\;\frac{\lvert \mathcal{S}\rvert}{\lvert \mathcal{T}\rvert} \approx 1 - \alpha}} \;\;\mathbb{E}_{(\mathbf{x}, y) \sim P}\left[\mathcal{L}(\mathbf{x}, y;\, \theta^{\mathcal{S}}) - \mathcal{L}(\mathbf{x}, y;\, \theta^{\mathcal{T}})\right],
\end{equation}
where \(\alpha \in (0,1)\) denotes the pruning ratio and \((1-\alpha)\) is the retained fraction. $\mathcal{L}$ is the task-specific loss.
\end{definition}

\noindent\textbf{Notation.} Throughout the paper we use $\alpha$ for the pruning ratio and $(1-\alpha)$ for the retained fraction.

While simple in form, this objective is difficult to achieve in practice. 
Effective coreset selection depends on identifying training samples that best support generalization. 
Popular approaches estimate sample importance via gradients, influence scores, or diversity heuristics~\cite{nogueira2018stability,song2022adaptive,xiao2025rethinking,zheng2022coverage}. 
Yet, these methods are often sensitive to noise, expensive to compute, or agnostic to class structure.

\subsection{Hypersphere Classifier}
\label{sec:hypersphere_classifier}

Hypersphere classifiers, such as Deep SVDD \cite{ruff2018deep} and FCDD \cite{liznerski2020explainable}, represent a class of anomaly detection methods that embed nominal data into a compact region of the feature space while mapping anomalies away.

\begin{definition}[Hypersphere Classifier]
Given a collection of samples \(\mathbf{x}_1, \dots, \mathbf{x}_n\) with labels \(y_i \in \{0,1\}\) (where \(y_i = 0\) denotes a nominal sample and \(y_i = 1\) denotes an anomaly), a hypersphere classifier seeks to learn a neural network mapping \(\phi(\mathbf{x};W)\) with parameters \(W\) and a randomly (non-trivial) center \(\mathbf{c} \in \mathbb{R}^d\) by optimizing the objective:
\begin{equation}
\min_{W} \frac{1}{n} \sum_{i=1}^{n} \Biggl[ (1-y_i)\, h\left(\|\phi(\mathbf{x}_i;W)- \mathbf{c}\|\right) - y_i \log\ \left( 1 - \exp\ \left(-h\left(\|\phi(\mathbf{x}_i;W)- \mathbf{c}\|\right)\ \right)\ \right) \Biggr],
\end{equation}
where \(h: \mathbb{R} \to \mathbb{R}\) is the pseudo-Huber loss \cite{huber1992robust} defined as
$h(a) = \sqrt{a^2 + 1} - 1$.
This loss function robustly penalizes deviations, interpolating from a quadratic penalty for small distances to a linear penalty for larger deviations. 
\end{definition}
\textbf{Coreset Selection Context.} This geometric intuition naturally facilitates threshold-based selection of representative data points, making hypersphere classifiers particularly suitable for robust coreset selection tasks.
In a one-vs-rest view, we ask if a sample is \emph{typical of class $c$}. Four sources tend to fall farther from the $c$-center in a class-conditional embedding: (i) true out-of-class samples, (ii) mislabeled instances whose content supports another class, (iii) borderline examples mixing evidence from multiple classes, and (iv) low-quality or corrupted inputs. All four inject conflicting gradients for class $c$, so excluding them when curating $c$'s coreset improves representativeness. 
In our experiments section, we show empirically that our derived coresets preferentially retain clean, central exemplars.

\section{Methodology}
\label{sec:method}

Our method is built upon class-wise hypersphere models that assess the representativeness of each sample by measuring the distance to a fixed hypersphere center, as illustrated in \autoref{fig:hypercore} (left). 
Specifically, we classify samples from the target class as normal, and all others as anomalies.

\begin{figure}[t!]
    \begin{center}
        \includegraphics[width=.8\textwidth]{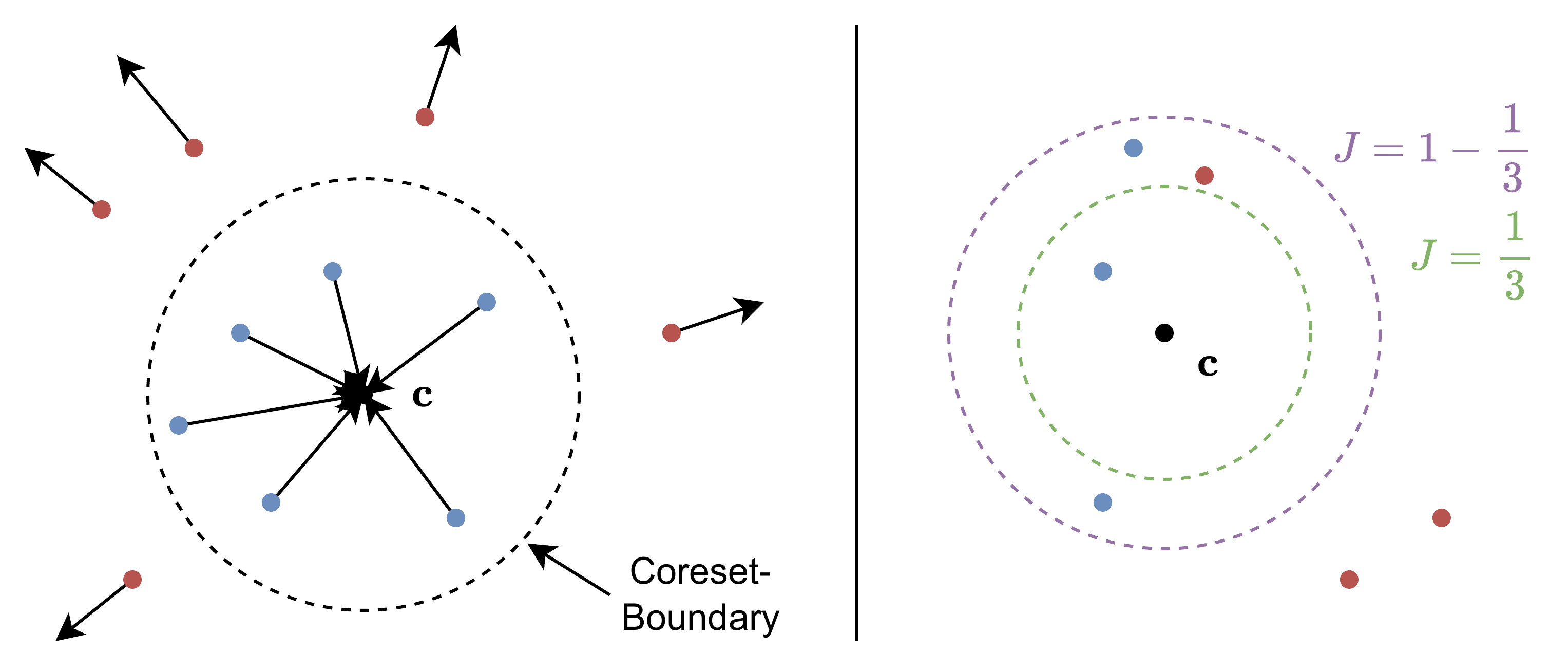}
        \caption{\label{fig:hypercore} \textbf{Left:} Visualization of HyperCore. In-class samples are pulled toward the center, while out-of-class samples are pushed away, creating a clear separation. \textbf{Right:} Illustration of adaptive pruning ratio selection via Youden’s J statistic. Two candidate thresholds are compared, with the purple threshold yielding a higher $J$ value and thus being considered more optimal for pruning.}
    \end{center}
\end{figure}

\subsection{HyperCore Loss with Zero Center}

A key simplification of HyperCore is anchoring the hypersphere center at the origin, i.e.\ $\mathbf{c} = \mathbf{0}$, which avoids explicit center estimation and enables lightweight optimization. 
Let $\phi(\mathbf{x}; W) \in \mathbb{R}^d$ denote the learned embedding of input $\mathbf{x}$ under parameters $W$. 
The embedding norm $\|\phi(\mathbf{x};W)\|$ is then used as a measure of conformity.

\begin{definition}[HyperCore Loss]
Given a sample $\mathbf{x}$ with label $y \in \{0,1\}$ (where $y=0$ denotes an in-class sample and $y=1$ an out-of-class sample), the HyperCore loss is defined as
\begin{equation}
L_{\text{HyperCore}}(\mathbf{x};W) 
= (1-y)\, h\!\left(\|\phi(\mathbf{x};W)\|\right) 
- y \, \log\!\Biggl( 1 - \exp\!\Bigl(-h\!\left(\|\phi(\mathbf{x};W)\|\right)\Bigr)\Biggr),
\end{equation}
where $h(a) = \sqrt{a^2 + 1} - 1$ is the pseudo-Huber loss~\cite{huber1992robust}.
This loss penalizes small embedding norms for anomalies and encourages in-class samples to lie close to the origin.
\end{definition}

We now show that the trivial solution $W=0$ (mapping all inputs to $\mathbf{0}$) is not optimal.

\begin{lemma}[Balanced Sampling Prevents Trivial Collapse]
\label{lemma:balanced_prevents_collapse}
Assume that each training batch is balanced, i.e.\ the number of in-class ($y=0$) samples equals the number of out-of-class ($y=1$) samples. 
Let $W_0$ be the all-zero weight configuration such that $\phi(\mathbf{x};W_0)=\mathbf{0}$ for all $\mathbf{x}$. 
Then the HyperCore loss at $W_0$ is unbounded:
\begin{equation}
L_{\text{HyperCore}}(\mathbf{x};W_0) \to \infty,
\end{equation}
which rules out the trivial solution as optimal.
\end{lemma}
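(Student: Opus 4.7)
The plan is a direct evaluation of the loss at $W_0$, observing that one of the two additive terms blows up while the other vanishes, and then using the balanced-batch hypothesis to ensure the divergent term is always present. Nothing deep is happening here; the point is that the $y=1$ term in $L_{\text{HyperCore}}$ is designed to explode precisely when the embedding norm is zero.

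First, I would substitute $\phi(\mathbf{x};W_0)=\mathbf{0}$, so that $\|\phi(\mathbf{x};W_0)\|=0$ and $h(0)=\sqrt{0^2+1}-1=0$. Splitting the per-sample loss by label: for an in-class sample ($y=0$), the contribution reduces to $(1-0)\cdot h(0)=0$; for an out-of-class sample ($y=1$), the contribution is $-\log\bigl(1-\exp(-h(0))\bigr)=-\log(1-1)=-\log 0=+\infty$. Thus the anomaly branch alone forces the loss to diverge.

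Second, I would invoke the balanced-batch assumption to tie this pointwise divergence to the batch-level objective. Since every batch contains an equal number of $y=0$ and $y=1$ samples, at least one out-of-class example is present whenever the batch is nonempty, so the batch-averaged $L_{\text{HyperCore}}$ inherits the $+\infty$ contribution. Hence $L_{\text{HyperCore}}(\mathbf{x};W_0)\to\infty$, and any configuration $W$ producing even mildly nonzero embeddings on anomalies attains a strictly smaller (finite) loss, ruling $W_0$ out as an optimum.

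The ``hard part'' is not really technical, but expository: making clear \emph{why} the balanced-batch hypothesis is needed at all. Without it, a degenerate batch containing only in-class samples would leave $W_0$ with loss zero and trivially optimal; the balanced condition is exactly what guarantees the push-away term $-\log(1-\exp(-h(\cdot)))$ is actually exercised. I would note this explicitly so the lemma's assumption does not appear decorative. No edge cases in $h$ or in the $\log$ need further discussion, since $h(0)=0$ is exact and $-\log 0=+\infty$ is the standard extended-real convention.
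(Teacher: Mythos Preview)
Your proposal is correct and follows essentially the same approach as the paper: evaluate $h(0)=0$, observe the $y=0$ term vanishes while the $y=1$ term becomes $-\log(1-e^0)=-\log 0=+\infty$, and note that a balanced batch guarantees an out-of-class sample is present so the batch loss diverges. Your added remark clarifying \emph{why} the balanced-batch assumption is necessary (to rule out degenerate all-in-class batches) is a helpful expository addition not made explicit in the paper, but the mathematical content is identical.
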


\begin{proof}
At $W \to W_0$, we have $\phi(\mathbf{x};W) \to \mathbf{0}$ and hence $\|\phi(\mathbf{x};W)\| \to 0$. 
Since $h(0)=0$, the loss for in-class samples ($y=0$) vanishes. 
For out-of-class samples ($y=1$), the term becomes
\begin{equation}
- \log\!\left( 1 - \exp\!\bigl(-h(0)\bigr)\right) = - \log(1 - e^{0}) = -\log(0) \to \infty.
\end{equation}
Thus, even a single out-of-class sample in a batch makes $L_{\text{HyperCore}}(\mathbf{x};W_0)$ diverge.
\end{proof}

\subsection{Static Pruning: A Baseline for Comparison}

As a baseline, we adopt fixed pruning ratios. 
Let $\mathcal{T}_c^{\mathrm{in}}$ denote the in-class dataset for class $c$. 
Each sample $\mathbf{x}_i \in \mathcal{T}_c^{\mathrm{in}}$ is mapped to an embedding with norm
\begin{equation}
d_i = \|\phi_c(\mathbf{x}_i)\|.
\end{equation}
Given a pruning fraction $\alpha$, we retain the $(1-\alpha)\cdot|\mathcal{T}_c^{\mathrm{in}}|$ samples with the smallest $d_i$. 
Formally,
\begin{equation}
\mathcal{S}_c^{\mathrm{fixed}} = \{ (\mathbf{x}_i,c) \in \mathcal{T}_c^{\mathrm{in}} \mid d_i \leq \tau_c^{\mathrm{fixed}}\},
\end{equation}
where $\tau_c^{\mathrm{fixed}}$ is chosen such that exactly $(1-\alpha)\cdot|\mathcal{T}_c^{\mathrm{in}}|$ samples are kept. 
The global coreset is $\mathcal{S}^{\mathrm{fixed}} = \bigcup_{c=0}^{C-1} \mathcal{S}_c^{\mathrm{fixed}}$.

Although simple, this approach does not adapt to class-specific density or noise levels. 
Furthermore, finding an effective $\alpha$ typically requires testing multiple candidates, increasing overhead.

\subsection{HyperCore with Adaptive Pruning Ratio}

Instead of fixing $\alpha$, we determine class-specific thresholds via Youden’s $J$ statistic~\cite{youden1950index}. 
For each class $c$, let
\begin{align}
D_c^{\mathrm{in}} &= \{ d_i = \|\phi(\mathbf{x}_i;W)\| : (\mathbf{x}_i,c) \in \mathcal{T}_c^{\mathrm{in}}\}, \text{ and } \nonumber \\
D_c^{\mathrm{out}} &= \{ d_j = \|\phi(\mathbf{x}_j;W)\| : (\mathbf{x}_j,c) \in \mathcal{T}\setminus \mathcal{T}_c^{\mathrm{in}}\}.
\end{align}

For any candidate threshold $\tau$, we define
\begin{equation}
\text{TPR}_c(\tau) = \frac{|\{d \in D_c^{\mathrm{in}} : d \leq \tau\}|}{|D_c^{\mathrm{in}}|}, \qquad \text{FPR}_c(\tau) = \frac{|\{d \in D_c^{\mathrm{out}} : d \leq \tau\}|}{|D_c^{\mathrm{out}}|}.
\end{equation}
Youden’s $J$ statistic, as illustrated in \autoref{fig:hypercore} (right), is then
\begin{equation}
J_c(\tau) = \text{TPR}_c(\tau) - \text{FPR}_c(\tau).
\end{equation}
The optimal threshold is
\begin{equation}
\tau_c^* = \argmax_{\tau \in D_c^{\mathrm{in}}} J_c(\tau).
\end{equation}

The class-specific coreset is
\begin{equation}
\mathcal{S}_c = \{ (\mathbf{x}_i,c) \in \mathcal{T}_c^{\mathrm{in}} \mid d_i \leq \tau_c^* \},
\end{equation}
and the global coreset is again the union $\mathcal{S} = \bigcup_{c=0}^{C-1} \mathcal{S}_c$.

\begin{lemma}[Threshold search complexity]
\label{lem:threshold_complexity}
For class $c$, computing $\tau_c^*=\arg\max_{\tau\in D_c^{\mathrm{in}}} J_c(\tau)$ requires sorting $D_c^{\mathrm{in}}$ and a single linear scan, i.e., $O(n_c\log n_c)$ time and $O(n_c)$ memory. Summed over classes, the total time is $\sum_c O(n_c\log n_c)$ and memory $\sum_c O(n_c)$, with $\sum_c n_c = N$.
\end{lemma}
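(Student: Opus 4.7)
The plan is to observe that $J_c(\tau)=\text{TPR}_c(\tau)-\text{FPR}_c(\tau)$ is piecewise constant in $\tau$, since both TPR and FPR are step functions whose only jumps occur at distances in $D_c^{\mathrm{in}}\cup D_c^{\mathrm{out}}$. Because the argmax is restricted to $\tau\in D_c^{\mathrm{in}}$, there are at most $n_c$ candidate thresholds, and a naive pairwise evaluation would cost $O\!\left(n_c(n_c+|D_c^{\mathrm{out}}|)\right)$. The goal is to replace that by one sort plus one monotone sweep.

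First I would sort $D_c^{\mathrm{in}}$ in ascending order as $d_{(1)}\le d_{(2)}\le\cdots\le d_{(n_c)}$ in time $O(n_c\log n_c)$; this is the dominant cost. For any $\tau=d_{(k)}$, the true-positive numerator is exactly $k$, so $\text{TPR}_c$ is updated in $O(1)$ per step as $k$ advances.

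Next, to update $\text{FPR}_c$, I would employ a two-pointer scheme: either pre-sort $D_c^{\mathrm{out}}$ (absorbed into the same asymptotic bound under the balanced-sampling regime of Lemma~\ref{lemma:balanced_prevents_collapse}, where $|D_c^{\mathrm{out}}|=O(n_c)$) or merge the two lists in a single pass. As $k$ advances through the sorted in-class distances, a second pointer moves monotonically through $D_c^{\mathrm{out}}$, counting how many entries do not exceed $d_{(k)}$. Since both pointers only move forward, the total work of the sweep is $O(n_c+|D_c^{\mathrm{out}}|)$. During the scan I maintain the running maximum of $J_c$ and the corresponding threshold, returning $\tau_c^{\ast}$ at termination. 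Memory is $O(n_c)$ for the sorted array plus $O(1)$ counters. Summing over classes then gives $\sum_c O(n_c\log n_c)$ time and $\sum_c O(n_c)$ memory, and since the in-class subsets partition $\mathcal{T}$ we have $\sum_c n_c=N$.

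The main obstacle is the amortized linearity of the FPR update: the argument hinges on the monotonicity of both the sorted candidate list and the auxiliary pointer over $D_c^{\mathrm{out}}$, which is precisely what collapses the threshold search from quadratic to linearithmic in $n_c$. One must also be careful with ties (multiple samples at identical distances), which are handled by treating each distinct distance level as a single breakpoint and applying the TPR/FPR update after all ties are absorbed before testing $J_c$. Once the monotone sweep is in place, the remainder reduces to bookkeeping of TPR, FPR, and the running argmax.
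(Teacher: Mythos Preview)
The paper states this lemma without proof, so there is no reference argument to compare against; your proposal is effectively supplying the missing justification. The sort-then-sweep structure you describe is the natural way to establish the claim, and your treatment of ties and the running argmax is fine.

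There is, however, a genuine gap in your handling of $D_c^{\mathrm{out}}$. You justify $|D_c^{\mathrm{out}}|=O(n_c)$ by appealing to the balanced-sampling assumption of Lemma~\ref{lemma:balanced_prevents_collapse}, but that lemma concerns the composition of \emph{training minibatches} (half in-class, half out-of-class) used to avoid collapse of the HyperCore loss; it says nothing about the cardinality of $D_c^{\mathrm{out}}$ at threshold-selection time. As defined in the paper, $D_c^{\mathrm{out}}=\{\|\phi(\mathbf{x}_j;W)\|:(\mathbf{x}_j,c)\in\mathcal{T}\setminus\mathcal{T}_c^{\mathrm{in}}\}$, so $|D_c^{\mathrm{out}}|=N-n_c$. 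With $C$ roughly balanced classes this is $(C-1)n_c$, which for ImageNet-1K is about $999\,n_c$ and certainly not $O(n_c)$. Your two-pointer sweep therefore costs $O(n_c+|D_c^{\mathrm{out}}|)=O(N)$ per class, and sorting $D_c^{\mathrm{out}}$ costs $O((N-n_c)\log(N-n_c))$, both of which dominate the claimed $O(n_c\log n_c)$.

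This leaves two possibilities you should address explicitly rather than paper over: either (i) the lemma's bound is understated and the honest per-class cost is $O\!\big(n_c\log n_c+|D_c^{\mathrm{out}}|\big)$ after a one-time sort of $D_c^{\mathrm{out}}$ (or $O(n_c\log N)$ via binary search into a globally sorted distance array), or (ii) the FPR is intended to be estimated on a size-$O(n_c)$ subsample of out-of-class distances mirroring the balanced training regime, in which case that assumption must be stated and is not implied by Lemma~\ref{lemma:balanced_prevents_collapse}. Either route is defensible, but the current invocation of balanced sampling does not do the work you need.
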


\subsection{Practical Remarks and Complexity}

\textbf{Training overhead.} Each $\phi_c$ is a small network trained on $\mathcal{T}_c^{\mathrm{in}} \cup \mathcal{T}_c^{\mathrm{out}}$, typically much cheaper than full-dataset gradient-based selection.  

\textbf{Thresholding overhead.} For each class, sorting $D_c^{\mathrm{in}}$ costs $O(n_c \log n_c)$, where $n_c=|\mathcal{T}_c^{\mathrm{in}}|$. Across classes, the complexity is near-linear in $N$.  

\textbf{No fraction tuning.} HyperCore automatically derives class-specific pruning ratios without requiring $\alpha$. A global budget can be enforced if needed, but allowing classes to self-threshold often yields greater robustness.

\section{Experiments}
\label{sec:exp}
In this section, we present experiments on ImageNet-1K \cite{deng2009imagenet} and CIFAR-10 \cite{krizhevsky2009learning} that assess HyperCore across several dimensions, including overall coreset quality, runtime efficiency, and robustness.

\textbf{Backbone and training.} 
For all experiments, we use ResNet-18~\cite{he2016deep}, following training protocols from DeepCore~\cite{guo2022deepcore}. 
Models are trained with SGD for 200 epochs using a cosine-annealed learning rate (initial 0.1), momentum 0.9, weight decay $5\times 10^{-4}$, and standard data augmentation (random crop + flip). 
On CIFAR-10, we use a batch size of 128; on ImageNet-1K, a batch size of 256. 
To establish upper-bound references, we also train ResNet-18 on random subsets of varying fractions of the full dataset.  

\textbf{HyperCore training.} 
Each class-specific HyperCore model is trained on balanced batches (half in-class, half out-of-class). 
We use Adam with learning rate $10^{-4}$, batch size 128 for CIFAR-10 and 512 for ImageNet, and train for 100 epochs per class. 
Since classes are independent, training runs fully in parallel, making the overall wall-clock cost scale with available compute rather than the number of classes.  

\textbf{Robustness protocol.} 
To evaluate label noise tolerance, we adopt the poisoning setup of~\cite{zhang2021understanding}, injecting label noise and malicious relabeling into the training data.

\begin{table}
  \centering
  \caption{\textbf{Coreset selection performance on ImageNet-1K}. We evaluate various pruning methods by training randomly initialized ResNet-18 models on their selected subsets and testing on the full ImageNet validation set. DeepFool and GraNd are excluded due to their substantial memory and computational demands.}
  \label{tab:imagenet}
  \resizebox{1\textwidth}{!}{%
    \begin{tabular}{ccccccc}
    \toprule
    Fraction $(1-\alpha)$ & 10\% & 20\% & 30\% & 40\% & 50\% &  100\% \\
    \midrule
    Herding \cite{welling2009herding} & 29.17$\pm$0.23 & 41.26$\pm$0.43 & 48.71$\pm$0.23 & 54.65 $\pm$ 0.07& 58.92 $\pm$ 0.19&69.52$\pm$0.45 \\
    k-Center Greedy \cite{sener2017active} & 48.11$\pm$0.29 & 59.06$\pm$0.22 & 62.91$\pm$0.22 & 64.93 $\pm$ 0.22  & 66.04 $\pm$ 0.05 &69.52$\pm$0.45 \\
    \midrule
    Forgetting \cite{toneva2018empirical} & \textbf{55.31$\pm$0.07} & \textbf{60.36$\pm$0.12} & 62.45$\pm$0.11 & 63.97 $\pm$ 0.01& 65.06 $\pm$ 0.02 &69.52$\pm$0.45 \\
    \midrule
    CAL\cite{margatina2021active} & 46.08$\pm$0.10 & 53.71$\pm$0.19 & 58.11$\pm$0.13 & 61.17$\pm$0.06 & 63.67 $\pm$0.28 &69.52$\pm$0.45 \\
    \midrule
    Craig \cite{mirzasoleiman2020coresets} & 51.39$\pm$0.13 & 59.33$\pm$0.22 & 62.72$\pm$0.13 & \textbf{64.96$\pm$0.00} & \textbf{66.29 $\pm$0.00} &69.52$\pm$0.45 \\
    GradMatch \cite{killamsetty2021grad} & 47.57$\pm$0.32 & 56.29$\pm$0.31 & 60.62$\pm$0.28 & 64.40$\pm$0.33 & 65.02 $\pm$ 0.50 &69.52$\pm$0.45 \\
    \midrule
    Glister \cite{killamsetty2021glister} & 47.02$\pm$0.29 & 55.93$\pm$0.17 & 60.38$\pm$0.17 & 62.86$\pm$0.07 & 65.07$\pm$0.08 &69.52$\pm$0.45 \\
    \midrule
    \textbf{HyperCore (ours)} & 49.94$\pm$0.02 & 58.12$\pm$0.11 & \textbf{62.96$\pm$0.01} & \textbf{64.96$\pm$0.04} & 65.32$\pm$0.13 & 69.52$\pm$0.45 \\
    \bottomrule
    \end{tabular}
  }
\end{table}

\subsection{ImageNet-1K Results}
Our evaluation on ImageNet-1K in \autoref{tab:imagenet} demonstrates that HyperCore achieves consistently strong performance, positioning itself among the top-performing coreset selection methods despite being explicitly designed with robustness as its primary objective. 
Specifically, at moderate to high retained fractions (30–50\%), HyperCore matches or slightly surpasses established state-of-the-art methods such as Craig and GradMatch. 
Even at more aggressive pruning (e.g., $(1-\alpha)=10\%$–$20\%$ retained), HyperCore closely follows the best-performing methods and achieves highly competitive results.

\begin{table*}[t!]

  \centering
  \caption{\label{tab:cifar}
\textbf{Fixed coreset selection accuracy on CIFAR-10} using randomly initialized ResNet-18 models \cite{he2016deep}. Bold entries indicate the highest performance at each data fraction. Without pruning $(\alpha=0\%)$, the model reaches 95.6$\pm$0.1.}
  \resizebox{\linewidth}{!}{
 \footnotesize{
 \setlength{\tabcolsep}{2pt}
\begin{tabular}{cccccccccccc}
\toprule
Fraction $(1-\alpha)$ & 0.1\%       & 0.5\%       & 1\%       & 5\%       & 10\%      & 20\%      & 30\%      & 40\%      & 50\%      & 60\%      & 90\% \\ \cmidrule(lr){1-12}

Random         
& 21.0\hspace{0.02em}$\pm$\hspace{0.02em}0.3          
& 30.8\hspace{0.02em}$\pm$\hspace{0.02em}0.6          
& 36.7\hspace{0.02em}$\pm$\hspace{0.02em}1.7          
& \textbf{64.5\hspace{0.02em}$\pm$\hspace{0.02em}1.1 }         
& 75.7\hspace{0.02em}$\pm$\hspace{0.02em}2.0          
& 87.1\hspace{0.02em}$\pm$\hspace{0.02em}0.5         
& 90.2\hspace{0.02em}$\pm$\hspace{0.02em}0.3          
& 92.1\hspace{0.02em}$\pm$\hspace{0.02em}0.1          
& 93.3\hspace{0.02em}$\pm$\hspace{0.02em}0.2          
& 94.0\hspace{0.02em}$\pm$\hspace{0.02em}0.2          
& 95.2\hspace{0.02em}$\pm$\hspace{0.02em}0.1      \\ \cmidrule(lr){1-12} 
Herding   \cite{welling2009herding}  & 19.8\hspace{0.02em}$\pm$\hspace{0.02em}2.7          & 29.2\hspace{0.02em}$\pm$\hspace{0.02em}2.4          & 31.1\hspace{0.02em}$\pm$\hspace{0.02em}2.9          & 50.7\hspace{0.02em}$\pm$\hspace{0.02em}1.6          & 63.1\hspace{0.02em}$\pm$\hspace{0.02em}3.4          & 75.2\hspace{0.02em}$\pm$\hspace{0.02em}1.0          & 80.8\hspace{0.02em}$\pm$\hspace{0.02em}1.5          & 85.4\hspace{0.02em}$\pm$\hspace{0.02em}1.2          & 88.4\hspace{0.02em}$\pm$\hspace{0.02em}0.6          & 90.9\hspace{0.02em}$\pm$\hspace{0.02em}0.4          & 94.4\hspace{0.02em}$\pm$\hspace{0.02em}0.1      \\
k-Center Greedy \cite{sener2017active} & 19.9\hspace{0.02em}$\pm$\hspace{0.02em}0.9          & 25.3\hspace{0.02em}$\pm$\hspace{0.02em}0.9          & 32.6\hspace{0.02em}$\pm$\hspace{0.02em}1.6          & 55.6\hspace{0.02em}$\pm$\hspace{0.02em}2.8          & 74.6\hspace{0.02em}$\pm$\hspace{0.02em}0.9          & \textbf{87.3\hspace{0.02em}$\pm$\hspace{0.02em}0.2}          & 91.0\hspace{0.02em}$\pm$\hspace{0.02em}0.3          & 92.6\hspace{0.02em}$\pm$\hspace{0.02em}0.2          & 93.5\hspace{0.02em}$\pm$\hspace{0.02em}0.5          & 94.3\hspace{0.02em}$\pm$\hspace{0.02em}0.2          & 95.5\hspace{0.02em}$\pm$\hspace{0.02em}0.2     \\ \cmidrule(lr){1-12}
Forgetting   \cite{toneva2018empirical}  & 21.3\hspace{0.02em}$\pm$\hspace{0.02em}1.2 & 29.7\hspace{0.02em}$\pm$\hspace{0.02em}0.3 & 35.6\hspace{0.02em}$\pm$\hspace{0.02em}1.0 & 51.1\hspace{0.02em}$\pm$\hspace{0.02em}2.0 & 66.9\hspace{0.02em}$\pm$\hspace{0.02em}2.0 & 86.6\hspace{0.02em}$\pm$\hspace{0.02em}1.0 & \textbf{91.7\hspace{0.02em}$\pm$\hspace{0.02em}0.3} & \textbf{93.0\hspace{0.02em}$\pm$\hspace{0.02em}0.2} & 94.1\hspace{0.02em}$\pm$\hspace{0.02em}0.2 & 94.6\hspace{0.02em}$\pm$\hspace{0.02em}0.2 & 95.4\hspace{0.02em}$\pm$\hspace{0.02em}0.1 \\
GraNd  \cite{paul2021deep}   & 14.6\hspace{0.02em}$\pm$\hspace{0.02em}0.8 & 17.2\hspace{0.02em}$\pm$\hspace{0.02em}0.8 & 18.6\hspace{0.02em}$\pm$\hspace{0.02em}0.8 & 28.9\hspace{0.02em}$\pm$\hspace{0.02em}0.5 & 41.3\hspace{0.02em}$\pm$\hspace{0.02em}1.3 & 71.1\hspace{0.02em}$\pm$\hspace{0.02em}1.3 & 88.3\hspace{0.02em}$\pm$\hspace{0.02em}1.0 & \textbf{93.0\hspace{0.02em}$\pm$\hspace{0.02em}0.4} & \textbf{94.8\hspace{0.02em}$\pm$\hspace{0.02em}0.1} & \textbf{95.2\hspace{0.02em}$\pm$\hspace{0.02em}0.1} & 95.5\hspace{0.02em}$\pm$\hspace{0.02em}0.1 \\ 
CCS (Gradient) \cite{zheng2022coverage} &
19.1\hspace{0.02em}$\pm$\hspace{0.02em}2.2 &
29.2\hspace{0.02em}$\pm$\hspace{0.02em}2.0 &
36.5\hspace{0.02em}$\pm$\hspace{0.02em}1.1 &
62.8\hspace{0.02em}$\pm$\hspace{0.02em}2.6 &
73.1\hspace{0.02em}$\pm$\hspace{0.02em}0.8 &
86.3\hspace{0.02em}$\pm$\hspace{0.02em}0.2 &
89.9\hspace{0.02em}$\pm$\hspace{0.02em}0.2 &
90.0\hspace{0.02em}$\pm$\hspace{0.02em}0.1 &
90.0\hspace{0.02em}$\pm$\hspace{0.02em}0.1 &
89.9\hspace{0.02em}$\pm$\hspace{0.02em}0.2 &
90.0\hspace{0.02em}$\pm$\hspace{0.02em}0.2 \\
ELFS \cite{zheng2024elfs} &
13.7\hspace{0.02em}$\pm$\hspace{0.02em}0.7 &
20.9\hspace{0.02em}$\pm$\hspace{0.02em}1.0 &
25.3\hspace{0.02em}$\pm$\hspace{0.02em}1.1 &
39.7\hspace{0.02em}$\pm$\hspace{0.02em}1.1 &
52.7\hspace{0.02em}$\pm$\hspace{0.02em}1.9 &
76.8\hspace{0.02em}$\pm$\hspace{0.02em}2.5 &
89.2\hspace{0.02em}$\pm$\hspace{0.02em}0.4 &
91.7\hspace{0.02em}$\pm$\hspace{0.02em}0.3 &
91.9\hspace{0.02em}$\pm$\hspace{0.02em}0.1 &
92.3\hspace{0.02em}$\pm$\hspace{0.02em}0.2 &
92.6\hspace{0.02em}$\pm$\hspace{0.02em}0.1 \\ \cmidrule(lr){1-12}
CAL   \cite{margatina2021active}      & 23.1\hspace{0.02em}$\pm$\hspace{0.02em}1.8          & 31.7\hspace{0.02em}$\pm$\hspace{0.02em}0.9          & 39.7\hspace{0.02em}$\pm$\hspace{0.02em}3.8          & 60.8\hspace{0.02em}$\pm$\hspace{0.02em}1.4          & 69.7\hspace{0.02em}$\pm$\hspace{0.02em}0.8          & 79.4\hspace{0.02em}$\pm$\hspace{0.02em}0.9          & 85.1\hspace{0.02em}$\pm$\hspace{0.02em}0.7          & 87.6\hspace{0.02em}$\pm$\hspace{0.02em}0.3          & 89.6\hspace{0.02em}$\pm$\hspace{0.02em}0.4          & 90.9\hspace{0.02em}$\pm$\hspace{0.02em}0.4          & 94.7\hspace{0.02em}$\pm$\hspace{0.02em}0.2    \\
DeepFool     \cite{ducoffe2018adversarial}    & 18.7\hspace{0.02em}$\pm$\hspace{0.02em}0.9          & 26.4\hspace{0.02em}$\pm$\hspace{0.02em}1.1          & 28.3\hspace{0.02em}$\pm$\hspace{0.02em}0.6          & 47.7\hspace{0.02em}$\pm$\hspace{0.02em}3.5          & 61.2\hspace{0.02em}$\pm$\hspace{0.02em}2.8          & 82.7\hspace{0.02em}$\pm$\hspace{0.02em}0.5          & 90.8\hspace{0.02em}$\pm$\hspace{0.02em}0.5          & 92.9\hspace{0.02em}$\pm$\hspace{0.02em}0.2          & 94.4\hspace{0.02em}$\pm$\hspace{0.02em}0.1          & 94.8\hspace{0.02em}$\pm$\hspace{0.02em}0.1          & \textbf{95.6\hspace{0.02em}$\pm$\hspace{0.02em}0.1} \\ \cmidrule(lr){1-12}
Craig     \cite{mirzasoleiman2020coresets}   & 19.3\hspace{0.02em}$\pm$\hspace{0.02em}0.3          & 29.1\hspace{0.02em}$\pm$\hspace{0.02em}1.6          & 32.8\hspace{0.02em}$\pm$\hspace{0.02em}1.8          & 42.5\hspace{0.02em}$\pm$\hspace{0.02em}1.7          & 59.9\hspace{0.02em}$\pm$\hspace{0.02em}2.1          & 78.1\hspace{0.02em}$\pm$\hspace{0.02em}2.5          & 90.0\hspace{0.02em}$\pm$\hspace{0.02em}0.5          & 92.8\hspace{0.02em}$\pm$\hspace{0.02em}0.2          & 94.3\hspace{0.02em}$\pm$\hspace{0.02em}0.2          & 94.8\hspace{0.02em}$\pm$\hspace{0.02em}0.1          & 95.5\hspace{0.02em}$\pm$\hspace{0.02em}0.1       \\
GradMatch   \cite{killamsetty2021grad}  & 17.4\hspace{0.02em}$\pm$\hspace{0.02em}1.6          & 27.1\hspace{0.02em}$\pm$\hspace{0.02em}1.1          & 27.7\hspace{0.02em}$\pm$\hspace{0.02em}2.0          & 41.8\hspace{0.02em}$\pm$\hspace{0.02em}2.4          & 55.5\hspace{0.02em}$\pm$\hspace{0.02em}2.3          & 78.1\hspace{0.02em}$\pm$\hspace{0.02em}2.0          & 89.6\hspace{0.02em}$\pm$\hspace{0.02em}0.7          & 92.7\hspace{0.02em}$\pm$\hspace{0.02em}0.5          & 94.1\hspace{0.02em}$\pm$\hspace{0.02em}0.2          & 94.7\hspace{0.02em}$\pm$\hspace{0.02em}0.3          & 95.4\hspace{0.02em}$\pm$\hspace{0.02em}0.1     \\ \cmidrule(lr){1-12}
Glister  \cite{killamsetty2021glister}  & 18.4\hspace{0.02em}$\pm$\hspace{0.02em}1.3          & 26.5\hspace{0.02em}$\pm$\hspace{0.02em}0.7          & 29.4\hspace{0.02em}$\pm$\hspace{0.02em}1.9          & 42.1\hspace{0.02em}$\pm$\hspace{0.02em}1.0          & 56.8\hspace{0.02em}$\pm$\hspace{0.02em}1.8          & 77.2\hspace{0.02em}$\pm$\hspace{0.02em}2.4          & 88.8\hspace{0.02em}$\pm$\hspace{0.02em}0.6          & 92.7\hspace{0.02em}$\pm$\hspace{0.02em}0.4          & 94.2\hspace{0.02em}$\pm$\hspace{0.02em}0.1          & 94.8\hspace{0.02em}$\pm$\hspace{0.02em}0.2          & 95.5\hspace{0.02em}$\pm$\hspace{0.02em}0.1 \\ \cmidrule(lr){1-12}
TDDS \cite{zhang2024spanning} &
18.3\hspace{0.02em}$\pm$\hspace{0.02em}1.0 &
32.4\hspace{0.02em}$\pm$\hspace{0.02em}0.9 &
39.1\hspace{0.02em}$\pm$\hspace{0.02em}1.1 &
63.7\hspace{0.02em}$\pm$\hspace{0.02em}1.1 &
\textbf{76.8\hspace{0.02em}$\pm$\hspace{0.02em}1.7} &
87.1\hspace{0.02em}$\pm$\hspace{0.02em}0.3 &
90.6\hspace{0.02em}$\pm$\hspace{0.02em}0.4 &
92.5\hspace{0.02em}$\pm$\hspace{0.02em}0.1 &
93.3\hspace{0.02em}$\pm$\hspace{0.02em}0.0 &
94.0\hspace{0.02em}$\pm$\hspace{0.02em}0.2 &
95.3\hspace{0.02em}$\pm$\hspace{0.02em}0.1 \\ \cmidrule(lr){1-12}
\textbf{HyperCore (ours)} &
\textbf{24.5\hspace{0.02em}$\pm$\hspace{0.02em}1.3} &
\textbf{35.4\hspace{0.02em}$\pm$\hspace{0.02em}1.0} &
\textbf{40.7\hspace{0.02em}$\pm$\hspace{0.02em}1.0} &
60.3\hspace{0.02em}$\pm$\hspace{0.02em}1.3 &
71.1\hspace{0.02em}$\pm$\hspace{0.02em}0.9 &
83.5\hspace{0.02em}$\pm$\hspace{0.02em}0.5 &
88.6\hspace{0.02em}$\pm$\hspace{0.02em}0.4 &
91.1\hspace{0.02em}$\pm$\hspace{0.02em}0.3 &
92.3\hspace{0.02em}$\pm$\hspace{0.02em}0.1 &
93.1\hspace{0.02em}$\pm$\hspace{0.02em}0.1 &
95.0\hspace{0.02em}$\pm$\hspace{0.02em}0.2 \\ 
\bottomrule
\end{tabular}
}
}
 \vspace{-1em}
\end{table*}

\begin{wrapfigure}{r}{0.6\textwidth}
  \centering
  \vspace{-4em}
 \includegraphics[width=0.6\textwidth]{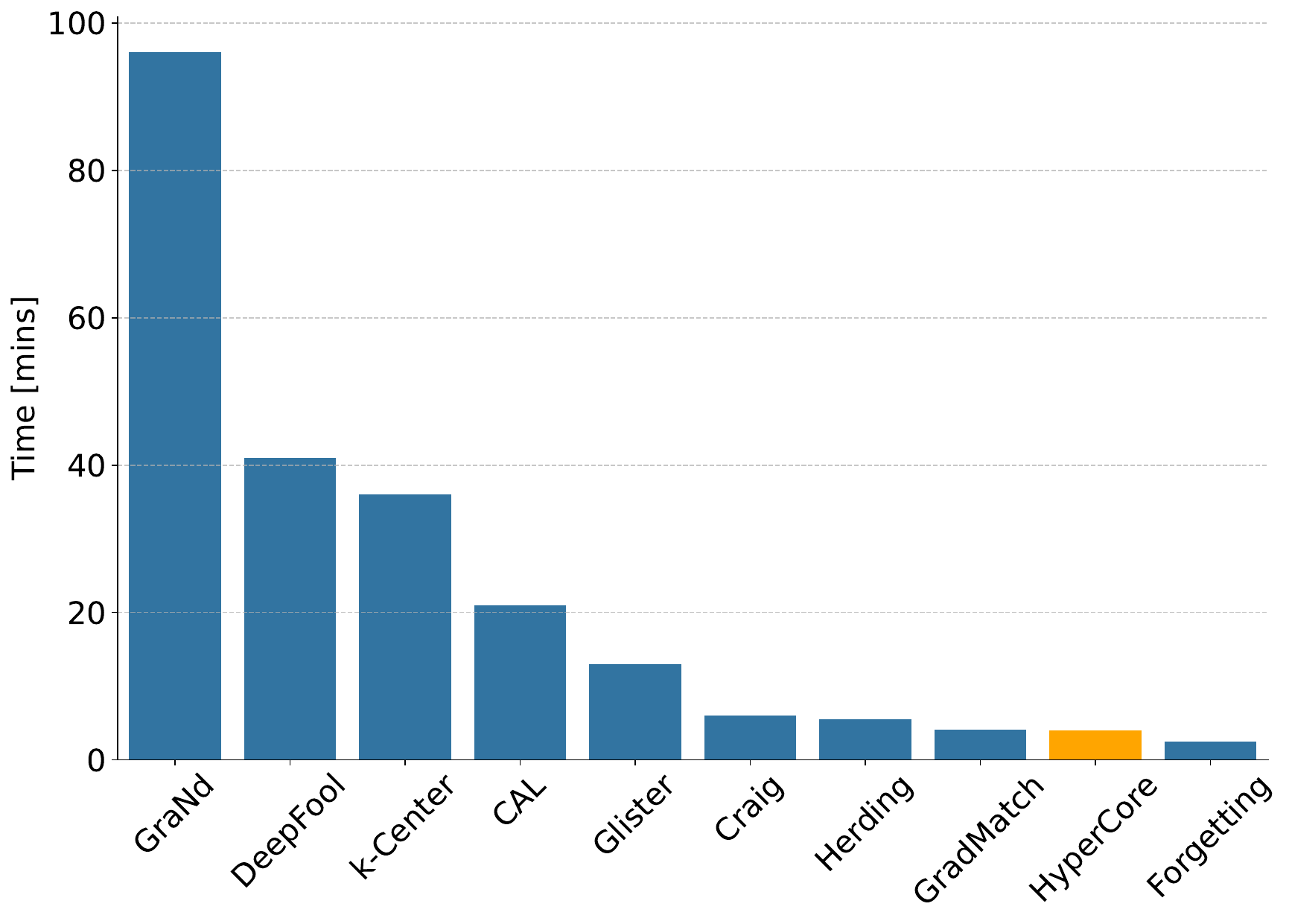}
 \caption{\label{fig:time} Time-Measurement on CIFAR-10. HyperCore ranks among the fastest techniques, including training, averaging only 4 minutes per class and benefiting from a parallelizable design. }
 \vspace{-1.3em}
\end{wrapfigure}

\newpage
\subsection{CIFAR-10 Results}
Comparisons to other coreset baselines are shown in \autoref{tab:cifar}. 
HyperCore achieves \emph{up to 5.6\% higher accuracy than the best baseline at aggressive pruning levels}. 
More specifically, HyperCore is on par for larger retained fractions while reaching the overall best results between $(1-\alpha)=0.1\%$ and $10\%$.
Furthermore, the parallelizable design of HyperCore enables us to obtain these outcomes with significantly reduced execution times, as shown in \autoref{fig:time}.

\begin{table*}[t!]

  \centering
  \caption{\label{tab:corrupted}
\textbf{Fixed coreset selection performance under label noise on CIFAR-10}, where 10\% of the training labels are randomly corrupted by assigning them to incorrect classes. Bold entries indicate the highest performance at each data fraction. Without pruning $(\alpha=0\%)$, the model reaches 90.8$\pm$0.1 (-4.8).}
  \resizebox{\linewidth}{!}{
 \footnotesize{
 \setlength{\tabcolsep}{2pt}
\begin{tabular}{cccccccccccc}
\toprule
Fraction $(1-\alpha)$ & 0.1\%       & 0.5\%       & 1\%       & 5\%       & 10\%      & 20\%      & 30\%      & 40\%      & 50\%      & 60\%      & 90\%              \\ \cmidrule(lr){1-12}
Herding   \cite{welling2009herding}  & 
11.4\hspace{0.02em}$\pm$\hspace{0.02em}0.9          & 10.8\hspace{0.02em}$\pm$\hspace{0.02em}0.5          & 11.1\hspace{0.02em}$\pm$\hspace{0.02em}0.9          & 10.6\hspace{0.02em}$\pm$\hspace{0.02em}1.1          & 
11.7\hspace{0.02em}$\pm$\hspace{0.02em}0.9          &
26.0\hspace{0.02em}$\pm$\hspace{0.02em}3.4          & 50.1\hspace{0.02em}$\pm$\hspace{0.02em}1.3          & 71.0\hspace{0.02em}$\pm$\hspace{0.02em}0.6          & 79.1\hspace{0.02em}$\pm$\hspace{0.02em}1.8          & 84.6\hspace{0.02em}$\pm$\hspace{0.02em}0.6          &  90.4\hspace{0.02em}$\pm$\hspace{0.02em}0.1          \\
&
\textcolor{red}{-8.4} &
\textcolor{red}{-18.4} &
\textcolor{red}{-20.0} &
\textcolor{red}{-40.1} &
\textcolor{red}{-51.4} &
\textcolor{red}{-49.2} &
\textcolor{red}{-30.7} &
\textcolor{red}{-14.4} &
\textcolor{red}{-9.3} &
\textcolor{red}{-6.3} &
\textcolor{red}{-4.0} \\ 
k-Center Greedy \cite{sener2017active} & 
12.6\hspace{0.02em}$\pm$\hspace{0.02em}1.3          & 14.3\hspace{0.02em}$\pm$\hspace{0.02em}0.8          & 16.1\hspace{0.02em}$\pm$\hspace{0.02em}1.0          & 29.6\hspace{0.02em}$\pm$\hspace{0.02em}1.6          & 41.7\hspace{0.02em}$\pm$\hspace{0.02em}3.0          & 62.0\hspace{0.02em}$\pm$\hspace{0.02em}2.0          & 73.8\hspace{0.02em}$\pm$\hspace{0.02em}1.8          & 80.2\hspace{0.02em}$\pm$\hspace{0.02em}0.7          & 83.9\hspace{0.02em}$\pm$\hspace{0.02em}0.7          & 86.6\hspace{0.02em}$\pm$\hspace{0.02em}0.5          & 90.4\hspace{0.02em}$\pm$\hspace{0.02em}0.3          \\ 
&
\textcolor{red}{-7.3} &
\textcolor{red}{-11.0} &
\textcolor{red}{-16.5} &
\textcolor{red}{-26.0} &
\textcolor{red}{-32.9} &
\textcolor{red}{-25.3} &
\textcolor{red}{-17.2} &
\textcolor{red}{-12.4} &
\textcolor{red}{-9.6} &
\textcolor{red}{-7.7} &
\textcolor{red}{-5.1} \\ 
\cmidrule(lr){1-12}
Forgetting   \cite{toneva2018empirical}  & 
\textbf{21.8\hspace{0.02em}$\pm$\hspace{0.02em}1.6} & 
31.1\hspace{0.02em}$\pm$\hspace{0.02em}1.0 & 
35.0\hspace{0.02em}$\pm$\hspace{0.02em}1.3 & 
52.8\hspace{0.02em}$\pm$\hspace{0.02em}1.2 & 
66.4\hspace{0.02em}$\pm$\hspace{0.02em}1.3 & 
83.2\hspace{0.02em}$\pm$\hspace{0.02em}1.0 & 
88.9\hspace{0.02em}$\pm$\hspace{0.02em}0.2 & 
90.7\hspace{0.02em}$\pm$\hspace{0.02em}0.2 & 
91.0\hspace{0.02em}$\pm$\hspace{0.02em}0.4 & 
91.5\hspace{0.02em}$\pm$\hspace{0.02em}0.4 & 
90.9\hspace{0.02em}$\pm$\hspace{0.02em}0.1 \\
&
\textcolor{green}{+0.5} &
\textcolor{green}{+1.4} &
\textcolor{red}{-0.6} &
\textcolor{green}{+1.7} &
\textcolor{red}{-0.5} &
\textcolor{red}{-3.4} &
\textcolor{red}{-2.8} &
\textcolor{red}{-2.3} &
\textcolor{red}{-3.1} &
\textcolor{red}{-3.1} &
\textcolor{red}{-4.5} \\ 
GraNd  \cite{paul2021deep}   & 
11.5\hspace{0.02em}$\pm$\hspace{0.02em}0.9 & 
11.9\hspace{0.02em}$\pm$\hspace{0.02em}0.8 & 
11.1\hspace{0.02em}$\pm$\hspace{0.02em}0.6 & 
10.8\hspace{0.02em}$\pm$\hspace{0.02em}1.1 & 
10.6\hspace{0.02em}$\pm$\hspace{0.02em}1.2 & 
25.4\hspace{0.02em}$\pm$\hspace{0.02em}0.9 & 
44.8\hspace{0.02em}$\pm$\hspace{0.02em}2.0 & 
67.2\hspace{0.02em}$\pm$\hspace{0.02em}2.6 & 
79.4\hspace{0.02em}$\pm$\hspace{0.02em}1.3 & 
86.3\hspace{0.02em}$\pm$\hspace{0.02em}0.3 & 
90.2\hspace{0.02em}$\pm$\hspace{0.02em}0.2  \\ 
&
\textcolor{red}{-3.1} &
\textcolor{red}{-5.3} &
\textcolor{red}{-7.5} &
\textcolor{red}{-18.1} &
\textcolor{red}{-30.7} &
\textcolor{red}{-45.7} &
\textcolor{red}{-43.5} &
\textcolor{red}{-25.8} &
\textcolor{red}{-15.4} &
\textcolor{red}{-8.9} &
\textcolor{red}{-5.3} \\ 
\cmidrule(lr){1-12}
CAL   \cite{margatina2021active}      & 
21.3\hspace{0.02em}$\pm$\hspace{0.02em}1.7          & 30.8\hspace{0.02em}$\pm$\hspace{0.02em}1.0          & 36.8\hspace{0.02em}$\pm$\hspace{0.02em}1.3          & 59.9\hspace{0.02em}$\pm$\hspace{0.02em}0.8          & 71.3\hspace{0.02em}$\pm$\hspace{0.02em}1.0          & 80.0\hspace{0.02em}$\pm$\hspace{0.02em}0.2          & 83.9\hspace{0.02em}$\pm$\hspace{0.02em}0.6          & 87.1\hspace{0.02em}$\pm$\hspace{0.02em}0.3          & 89.1\hspace{0.02em}$\pm$\hspace{0.02em}0.2          & 90.6\hspace{0.02em}$\pm$\hspace{0.02em}0.2          & 91.9\hspace{0.02em}$\pm$\hspace{0.02em}0.1          \\
&
\textcolor{red}{-1.8} &
\textcolor{red}{-0.9} &
\textcolor{red}{-2.9} &
\textcolor{red}{-0.9} &
\textcolor{red}{-1.6} &
\textcolor{green}{+0.6} &
\textcolor{red}{-1.2} &
\textcolor{red}{-0.5} &
\textcolor{red}{-0.5} &
\textcolor{red}{-0.3} &
\textcolor{red}{-2.8} \\ 
DeepFool     \cite{ducoffe2018adversarial}    & 
17.4\hspace{0.02em}$\pm$\hspace{0.02em}0.9          & 21.6\hspace{0.02em}$\pm$\hspace{0.02em}1.4          & 25.3\hspace{0.02em}$\pm$\hspace{0.02em}1.3          & 33.6\hspace{0.02em}$\pm$\hspace{0.02em}0.4          & 43.9\hspace{0.02em}$\pm$\hspace{0.02em}3.2          & 65.5\hspace{0.02em}$\pm$\hspace{0.02em}1.6          & 77.0\hspace{0.02em}$\pm$\hspace{0.02em}1.3          &  84.5\hspace{0.02em}$\pm$\hspace{0.02em}0.5          & 86.6\hspace{0.02em}$\pm$\hspace{0.02em}0.9          & 88.7\hspace{0.02em}$\pm$\hspace{0.02em}0.5          & 90.8\hspace{0.02em}$\pm$\hspace{0.02em}0.2          \\ 
&
\textcolor{red}{-1.3} &
\textcolor{red}{-4.8} &
\textcolor{red}{-3.0} &
\textcolor{red}{-14.1} &
\textcolor{red}{-17.3} &
\textcolor{red}{-17.2} &
\textcolor{red}{-13.8} &
\textcolor{red}{-8.4} &
\textcolor{red}{-7.8} &
\textcolor{red}{-6.1} &
\textcolor{red}{-4.8} \\ \cmidrule(lr){1-12}
Craig     \cite{mirzasoleiman2020coresets}   & 
19.5\hspace{0.02em}$\pm$\hspace{0.02em}1.4          & 20.2\hspace{0.02em}$\pm$\hspace{0.02em}1.4          & 24.8\hspace{0.02em}$\pm$\hspace{0.02em}1.1          & 30.4\hspace{0.02em}$\pm$\hspace{0.02em}0.9          & 31.7\hspace{0.02em}$\pm$\hspace{0.02em}1.6          & 39.2\hspace{0.02em}$\pm$\hspace{0.02em}1.5          & 58.4\hspace{0.02em}$\pm$\hspace{0.02em}2.9          & 73.1\hspace{0.02em}$\pm$\hspace{0.02em}1.4          & 81.4\hspace{0.02em}$\pm$\hspace{0.02em}0.6          & 85.3\hspace{0.02em}$\pm$\hspace{0.02em}0.4          & 90.5\hspace{0.02em}$\pm$\hspace{0.02em}0.3          \\
&
\textcolor{green}{+0.2} &
\textcolor{red}{-8.9} &
\textcolor{red}{-8.0} &
\textcolor{red}{-12.1} &
\textcolor{red}{-28.2} &
\textcolor{red}{-38.9} &
\textcolor{red}{-31.6} &
\textcolor{red}{-19.7} &
\textcolor{red}{-12.9} &
\textcolor{red}{-9.5} &
\textcolor{red}{-5.0} \\ 
GradMatch   \cite{killamsetty2021grad}  & 
15.7\hspace{0.02em}$\pm$\hspace{0.02em}2.0          & 21.4\hspace{0.02em}$\pm$\hspace{0.02em}0.7          & 23.0\hspace{0.02em}$\pm$\hspace{0.02em}1.6          & 27.6\hspace{0.02em}$\pm$\hspace{0.02em}2.4          & 31.4\hspace{0.02em}$\pm$\hspace{0.02em}2.5          & 37.7\hspace{0.02em}$\pm$\hspace{0.02em}2.1          & 55.6\hspace{0.02em}$\pm$\hspace{0.02em}2.5          & 72.0\hspace{0.02em}$\pm$\hspace{0.02em}1.4          & 80.3\hspace{0.02em}$\pm$\hspace{0.02em}0.4          & 85.3\hspace{0.02em}$\pm$\hspace{0.02em}0.5          & 90.2\hspace{0.02em}$\pm$\hspace{0.02em}0.2          \\ 
&
\textcolor{red}{-1.7} &
\textcolor{red}{-5.7} &
\textcolor{red}{-4.7} &
\textcolor{red}{-14.2} &
\textcolor{red}{-24.1} &
\textcolor{red}{-40.4} &
\textcolor{red}{-34.0} &
\textcolor{red}{-20.7} &
\textcolor{red}{-13.8} &
\textcolor{red}{-9.4} &
\textcolor{red}{-5.2} \\ \cmidrule(lr){1-12}
Glister  \cite{killamsetty2021glister}  & 
14.9\hspace{0.02em}$\pm$\hspace{0.02em}2.0          & 20.9\hspace{0.02em}$\pm$\hspace{0.02em}1.5          & 24.5\hspace{0.02em}$\pm$\hspace{0.02em}1.3          & 29.0\hspace{0.02em}$\pm$\hspace{0.02em}1.9          & 31.7\hspace{0.02em}$\pm$\hspace{0.02em}2.1          & 40.2\hspace{0.02em}$\pm$\hspace{0.02em}2.3          & 57.2\hspace{0.02em}$\pm$\hspace{0.02em}1.3          & 72.2\hspace{0.02em}$\pm$\hspace{0.02em}1.3          & 80.6\hspace{0.02em}$\pm$\hspace{0.02em}0.4          & 85.4\hspace{0.02em}$\pm$\hspace{0.02em}0.5          & 90.2\hspace{0.02em}$\pm$\hspace{0.02em}0.2 \\ 
&
\textcolor{red}{-3.5} &
\textcolor{red}{-5.6} &
\textcolor{red}{-4.9} &
\textcolor{red}{-13.1} &
\textcolor{red}{-25.1} &
\textcolor{red}{-37.0} &
\textcolor{red}{-31.6} &
\textcolor{red}{-20.5} &
\textcolor{red}{-13.6} &
\textcolor{red}{-9.4} &
\textcolor{red}{-5.3} \\ \cmidrule(lr){1-12}
\textbf{HyperCore (ours)} & 20.9\hspace{0.02em}$\pm$\hspace{0.02em}1.5 & 
\textbf{32.6\hspace{0.02em}$\pm$\hspace{0.02em}1.1} & 
\textbf{41.4\hspace{0.02em}$\pm$\hspace{0.02em}1.0} & 
\textbf{60.5\hspace{0.02em}$\pm$\hspace{0.02em}1.4} & 
\textbf{70.0\hspace{0.02em}$\pm$\hspace{0.02em}0.9} & 
\textbf{84.2\hspace{0.02em}$\pm$\hspace{0.02em}0.9}          & \textbf{89.1\hspace{0.02em}$\pm$\hspace{0.02em}0.4}          & \textbf{91.0\hspace{0.02em}$\pm$\hspace{0.02em}0.5}          & \textbf{92.6\hspace{0.02em}$\pm$\hspace{0.02em}0.1}          & \textbf{93.5\hspace{0.02em}$\pm$\hspace{0.02em}0.2}          & \textbf{93.7\hspace{0.02em}$\pm$\hspace{0.02em}0.3 }         \\ 
&
\textcolor{red}{-3.6} &
\textcolor{red}{-2.8} &
\textcolor{green}{+0.7} &
\textcolor{green}{+0.2} &
\textcolor{red}{-1.1} &
\textcolor{green}{+0.7} &
\textcolor{green}{+0.5} &
\textcolor{red}{-0.1} &
\textcolor{green}{+0.3} &
\textcolor{green}{+0.4} &
\textcolor{red}{-1.3} \\ 
\bottomrule
\end{tabular}
}
}
\end{table*}

In \autoref{tab:corrupted}, we investigate how each coreset selection strategy holds up when 10\% of the CIFAR-10 training labels are corrupted with random misassignments \cite{zhang2021understanding}. 
As one might expect, most approaches struggle at very small retained fractions (e.g., $(1-\alpha)=0.1\%$ or $0.5\%$). 
In fact, if we look at the leftmost columns, methods such as GraNd or DeepFool perform particularly poorly, sometimes falling below 15\% accuracy. 
Yet even at these high pruning levels, it is notable that Forgetting stands out with a slightly higher result at $(1-\alpha)=0.1\%$.

For larger retained fractions (0.5\% and beyond), one sees the advantages of HyperCore sharpen. 
For instance, at 1\% of the data, HyperCore achieves 41.4\% accuracy, surpassing the second-best method (CAL) by a margin of about 4.6\%. This gap widens in the mid-range fractions (10\%, 20\%, 30\%), underscoring the resilience of our method to label noise: whereas other approaches tend to plateau or fade noticeably, HyperCore keeps the performance or advances.
As expected, accuracies converge at 90\% dataset usage since almost all data is included. 
Notably, from 40\% onward, HyperCore empirically validates the theory of \cite{sorscher2022beyond} by surpassing even full dataset performance. 
Further experiments (see appendix) with VGG-16, InceptionNet, ResNet-50, and WRN-16-8 confirm these observations: HyperCore is exceptionally efficient under noisy conditions.


\begin{wraptable}{r}{0.5\textwidth}
  \centering
  \vspace{-3.5em}
  \caption{CIFAR-10 performance for training on full dataset vs on adaptively derived coresets (HyperCore).}
  \label{tab:label_noise}
  \resizebox{0.48\textwidth}{!}{%
    \begin{tabular}{cccc}
    \toprule
    \multirow{2}{*}{\textbf{Poisoning}}  & \multicolumn{2}{c}{\textbf{Accuracy  [\%]}} & \multirow{2}{*}{$\alpha_\text{HyperCore}$  [\%]}  \\
    & \textbf{Full Dataset} & \textbf{HyperCore} & \\
    \midrule
    0\%  & \textbf{95.6$\pm$0.1} & \textbf{95.6$\pm$0.2} & 00.6$\pm$0.4 \\
    10\%  & 90.8$\pm$0.1 & \textbf{94.8$\pm$0.2} & 16.4$\pm$1.3\\
    20\%  & 87.5$\pm$0.4 & \textbf{93.5$\pm$0.3} & 29.4$\pm$2.4\\
    30\%  & 85.9$\pm$0.3 & \textbf{91.1$\pm$0.3} & 41.2$\pm$2.7\\
    40\%  & 83.7$\pm$0.5 & \textbf{86.9$\pm$1.3} & 50.5$\pm$4.0\\
    \bottomrule
    \end{tabular}
  }
  \vspace{-0.6em}
\end{wraptable}

\subsection{Adaptive Coreset Selection}

The results presented in \autoref{tab:label_noise} demonstrate the effectiveness of adaptive coreset selection using HyperCore across various levels of label poisoning on CIFAR-10. When no label noise is present, HyperCore matches the accuracy obtained by training on the full dataset, illustrating that adaptive selection does not compromise model performance. More notably, as the level of label noise increases, HyperCore significantly outperforms training on the full dataset.

\begin{wrapfigure}{r}{0.5\textwidth}
  \centering
  \vspace{-3.5em}
 \includegraphics[width=0.5\textwidth]{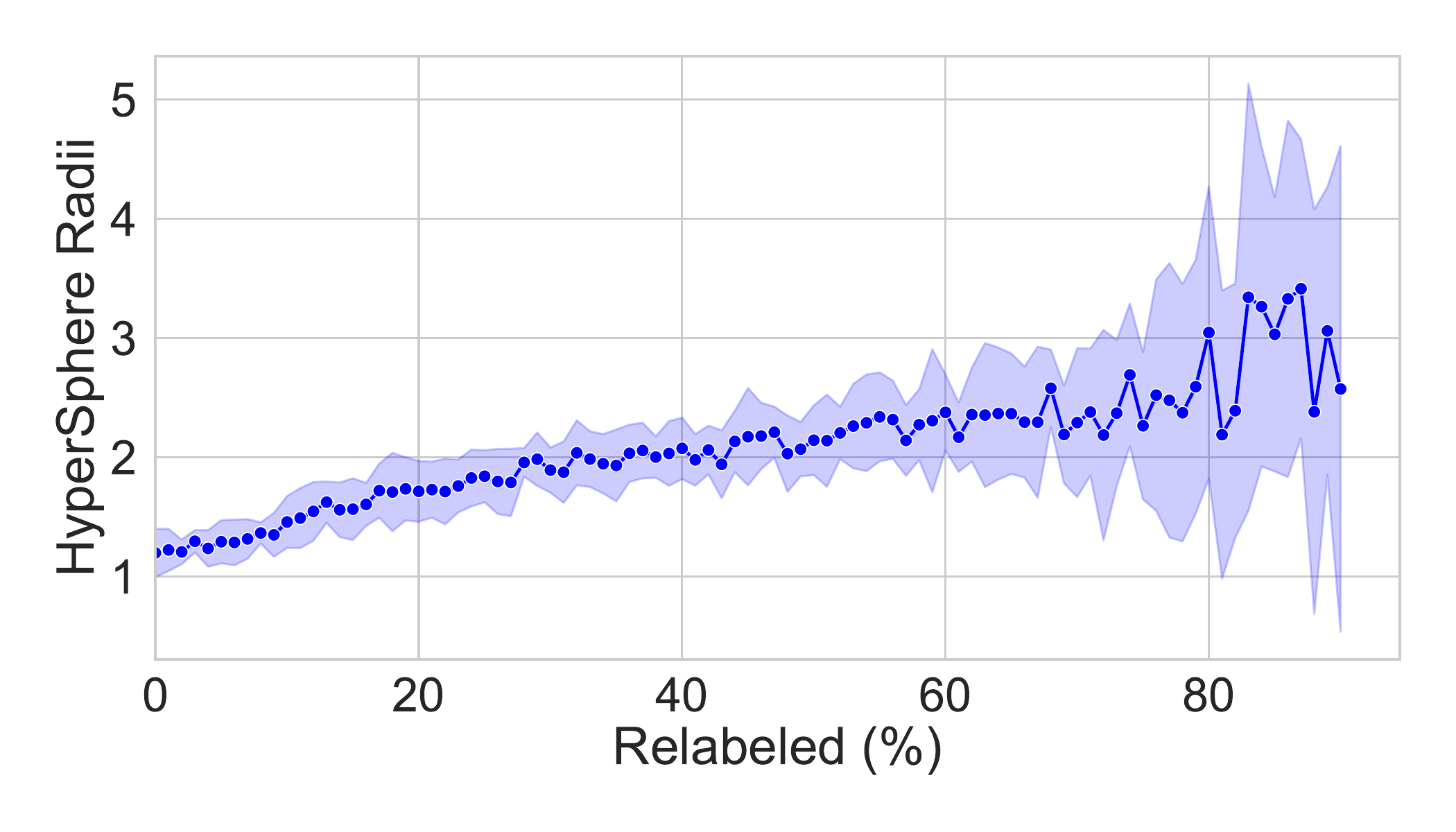}
 \caption{\label{fig:radii} Average hypersphere radii (adaptive thresholds) and their standard deviations as a function of the relabeling percentage. The plot reveals that both the mean radius and its variability increase with higher levels of label poisoning, reflecting a broader dispersion in the embedding space and an adaptive expansion of the decision boundary to accommodate noise. }
 \vspace{-1.5em}
\end{wrapfigure}

The \autoref{fig:radii} shows that as the percentage of relabeling increases (i.e., as the level of label poisoning grows), the adaptive thresholds - the hypersphere radii - tend to increase. This behavior suggests that with more noise, the embeddings become more dispersed; in order to retain as many true inlier samples as possible, the model adapts by enlarging the decision boundary. Moreover, the rising standard deviation across classes indicates that the impact of label noise is not uniform: some classes experience a greater shift in their radii than others. In short, the model compensates for increased uncertainty by raising the threshold, which, though it might seem counterintuitive at first, is necessary to maintain robust discrimination between inliers and outliers under noisy conditions.

\subsection{Analysis of Youden’s J Statistics}
Regarding adaptiveness, \autoref{fig:robust} compares the key performance curves of \textbf{HyperCore} under increasing levels of artificially introduced label noise. 
In the left panel, we plot the confusion-based rates, namely True Positive Rate (\textbf{TPR}), False Positive Rate (\textbf{FPR}), True Negative Rate (\textbf{TNR}), and False Negative Rate (\textbf{FNR}), as a function of the poisoning percentage. 
Broadly speaking, positive rates denote included samples, while negative rates mean they are excluded.
Despite the growing noise, we observe that TPR and TNR remain consistently higher than FPR and FNR, indicating that our hypersphere models selectively exclude corrupted or ambiguous samples. 
Meanwhile, FPR and FNR only moderately increase, suggesting that HyperCore successfully mitigates the risk of discarding genuine samples or retaining mislabeled ones.

In the right panel, we plot Youden’s $J$ in orange, alongside the fraction of removed data in blue. 
Even as the fraction removed escalates for severe noise, Youden’s $J$ remains relatively stable. 
This interplay demonstrates that the pruning decisions of HyperCore are not overly conservative. 
Although HyperCore discards an increasingly large portion of the data under extreme mislabeling, it still identifies informative inliers with sufficient reliability to maintain a viable Youden’s $J$. 
Overall, the figure underscores strong resilience to label noise. 

\begin{figure}[t!]
    \begin{center}
        \includegraphics[width=\textwidth]{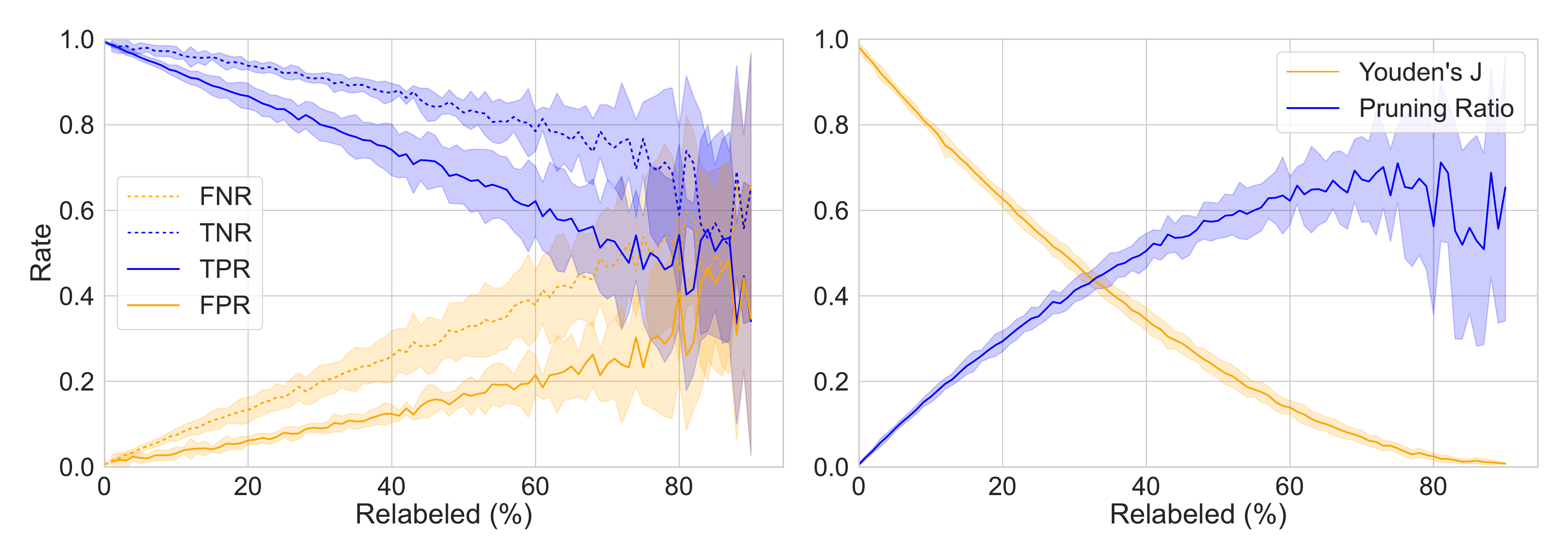}
        \caption{\label{fig:robust} \textbf{Left:} Confusion-based metrics (TPR, FPR, TNR, FNR) under increasing label poisoning in CIFAR-10.
        \textbf{Right:} Youden’s $J$ (orange) and fraction of removed samples (blue). Both plots highlight HyperCore’s robust coreset selection behavior across varying degrees of poisoned labels (error-bands highlight the variance between the class labels).
        }
    \end{center}
\end{figure}

\section{Related Work}
\label{sec:related}

\paragraph{Distance-Based Pruning and Anomaly Detection.}
In parallel, a rich literature on anomaly or outlier detection capitalizes on distance metrics. 
One-class methods such as \emph{Support Vector Data Description} (SVDD)~\cite{tax2004support} enclose normal samples in a minimal-radius hypersphere, labeling points outside as outliers. 
Deep SVDD extends this idea to a learned representation, forcing inliers near a randomly-sampled center in feature space~\cite{ruff2018deep,liznerski2020explainable}. 
These methods align with the \emph{hypersphere} concept in \textbf{HyperCore}: 
Like one-class approaches, HyperCore identifies “inlier” samples with small norm while pushing outliers away. 
Unlike hypersphere classifiers, HyperCore uses simpler per-class MLPs with Youden’s J thresholds, and a modified loss function.

\paragraph{Norm-Based Confidence Scores in Distillation and Noise Removal.}
Several approaches for dataset distillation or data pruning measure “confidence” using norms in feature space. 
For example, Lee et al. \cite{lee2018simple} compute class-conditional Gaussians on deep embeddings (Mahalanobis distance) and exclude points far from the nearest class center. 
Similarly, Pleiss et al. \cite{pleiss2020identifying} track margin-based criteria to detect possible mislabels, effectively removing outlier examples. 
In coreset selection under noisy labels, Kang et al. \cite{kang2019decoupling} highlight that samples near the class centroid are typically correct, while label errors lie on the distribution fringe. 
This principle resonates with HyperCore’s geometry-driven approach: 
we train a \emph{binary in/out} classifier per class to separate inlier vs.\ out-class data, then threshold based on distance.

\paragraph{Lightweight vs.\ Full-Model Coreset Approaches.}
While gradient or influence-based selection can yield high-quality coresets~\cite{paul2021deep,killamsetty2021grad}, such methods typically incur substantial overhead: 
they require partial or entire model training to compute per-sample gradients or forgetting events~\cite{toneva2018empirical}. 
By contrast, HyperCore remains \emph{partially trained} (it fits a class-wise MLP to discriminate in/out), but each MLP sees only a subset of the dataset and does not require a large architecture or global alignment. 
This design reduces computation while flexibly adapting to each class's unique geometry.

\paragraph{Prototype Selection and Continual Learning.}
Prototype-based selection methods identify exemplars that approximate the class mean. 
For instance, iCaRL~\cite{rebuffi2017icarl} selects a small set of class representatives that minimize the distance to that class’s mean embedding. 
When data are mislabeled or heavily imbalanced, however, simple centroid-based picks can inadvertently keep outliers if they exhibit subtle bias in embedding space. 
HyperCore addresses such issues by \emph{actively} learning a boundary between inliers and outliers for each class, producing a more robust subset. 
In continual learning, HyperCore could replace iCaRL’s herding by selecting reliably central class samples.

\paragraph{Relation to Our HyperCore Approach.}
We draw on the success of minimal, class-wise boundaries (like SVDD), but unify them with a simple \emph{per-class in/out MLP} plus a \emph{Youden’s $J$ threshold} to auto-prune ambiguous points. 
As a result, HyperCore effectively discards label noise and yields a representative coreset \emph{without} requiring a large network or full-model backprop. 
This blend of geometry-driven inlier detection, threshold-based selection, and partial learning stands in contrast to existing coreset methods that rely on global fractions or heavy optimization of large models. 
HyperCore complements existing methods by providing a lightweight, noise-robust, class-specific alternative.

\section{Limitations}
\label{sec:limitations}
HyperCore has several limitations that are worth noting despite its significant strengths. 
The hypersphere models depend heavily on learning meaningful embeddings from relatively small per-class data subsets. 
In classes with extremely limited or highly imbalanced data, the learned boundaries might degrade, reducing the robustness of HyperCore’s adaptive pruning.
Also, training separate hypersphere models per class could become computationally expensive as the number of classes scales (e.g., beyond thousands of classes) and the amount of GPUs/CPUs is limited, although HyperCore significantly reduces computational overhead compared to full-model coreset methods.

In addition, Youden’s $J$ implicitly treats false positives and false negatives as equally costly. 
In domains with highly asymmetric costs or constraints (e.g., extreme class imbalance or safety-critical false negatives), alternative thresholding rules may be preferable—such as optimizing a weighted $J$ (cost-sensitive TPR/FPR), setting a target precision/recall operating point, or calibrating thresholds via validation risk minimization.

\section{Conclusion \& Future Work}

We introduced HyperCore, a robust coreset selection framework leveraging hypersphere models. 
Unlike existing methods, HyperCore utilizes class-conditional embeddings with adaptive pruning thresholds determined by Youden's J statistic, enabling automatic and noise-aware subset selection without extensive hyperparameter tuning. 
HyperCore raises the bar for robust coreset selection, setting new benchmarks for pruning accuracy and label noise tolerance.
By effectively discarding mislabeled or ambiguous data points, HyperCore ensures that the retained coresets are compact yet highly representative, thereby promoting efficient and robust model training.

Future work includes applying HyperCore to semi-supervised, continual learning, and large-scale tasks.
Additionally, analyzing dynamic updates to hypersphere boundaries could further enhance HyperCore’s versatility.

\bibliographystyle{splncs04}
\bibliography{refs}
\end{document}